\documentclass{article}
\PassOptionsToPackage{dvipsnames}{xcolor}
\usepackage{iclr2027_conference,times}

\usepackage{microtype}
\usepackage{amsmath, amssymb, amsthm, mathtools, bm}
\usepackage{booktabs, array, multirow}
\usepackage{graphicx}
\usepackage{float}
\usepackage{xcolor}
\usepackage{enumitem}
\usepackage{xspace}
\usepackage{tikz}
\usepackage{bbm}
\usetikzlibrary{arrows.meta, positioning, fit, backgrounds, calc}
\usepackage{pgfplots}
\pgfplotsset{compat=1.16}
\usepackage[colorlinks=true, linkcolor=NavyBlue, citecolor=NavyBlue,
            urlcolor=NavyBlue]{hyperref}

\iclrfinalcopy  

\renewcommand{\arraystretch}{1.12}

\newtheorem{proposition}{Proposition}

\newcommand{\scope}{\textsc{Scope}\xspace}
\newcommand{\scopefullname}{Score-Isolated Control Optimization with Provenance-Bound Evidence}

\csname @@input\endcsname physiq_s115_lingbot_claims_generated.tex

\title{SCOPE: Score-Isolated Agentic Optimization\\
for Video World Models}
\author{Yuhua Jiang\\
Tsinghua University
\And Jiaming Wang\\
National University of Singapore
\AND Qingbin Liu\\
Tencent
\And Feifei Gao\\
Tsinghua University}
\hypersetup{
  pdfauthor={Yuhua Jiang, Jiaming Wang, Qingbin Liu, Feifei Gao},
  pdftitle={SCOPE: Score-Isolated Agentic Optimization for Video World Models}
}

\begin{document}
\maketitle
\fancyhead{}
\renewcommand{\headrulewidth}{0pt}

\begin{abstract}
\noindent
Video world models are increasingly used as simulators for planning and embodied decision making, yet improving them at inference time introduces a subtle evaluation problem: prompts, samplers, verifiers, and selectors may evolve together, making it difficult to attribute gains or prevent held-out feedback from shaping the final policy. We introduce \scope (\emph{\scopefullname}), a framework for auditable inference-time adaptation of frozen video world models. \scope represents external controls as a typed state, updates this state only through bounded changes supported by development evidence, and freezes the resulting policy before held-out evaluation. On Physics-IQ benchmark, \scope improves over the exact frozen base by $+14.24$ (95\% CI $[+8.10,+21.23]$). Controlled ablations further identify gains from scene specification, sampling, and learned selection, while the margin over the strongest matched agentic baseline remains unresolved. Cross-backbone and prospective evaluations reveal a complementary result: useful inference-time updates exist, but their benefits do not transfer uniformly across models and settings. Together, these findings suggest that reliable inference-time adaptation requires not only better proposals, but also a principled mechanism for deciding which updates should become part of the deployed system.
Code is available at \url{https://github.com/YuhuaJiang2002/SCOPE}.

\end{abstract}

\section{Introduction}
\label{sec:intro}
Video world models provide a flexible interface for predicting future visual states and
are increasingly used for simulation, planning, and embodied decision making. Yet
visually plausible videos are not necessarily physically consistent, and retraining a
large backbone for every new task is expensive. This motivates inference-time
adaptation: improving a frozen model through external controls such as prompts,
sampling strategies, verifiers, retrieval, and candidate selection.

The challenge is that these controls are typically coupled. An agent may rewrite a
prompt, alter the sampler, call a tool, rerank candidates, and reuse previous experience
within the same workflow. If several choices change at once, a higher final score does
not identify which intervention was responsible. More importantly, if held-out outcomes
are reused to choose later controls, the system can adapt to the evaluation itself. We
refer to this as the \emph{inference-control evaluation gap}: the gap between observing
an improvement and establishing that it comes from a well-defined inference-time
intervention.

We address this problem with \scope, short for \emph{\scopefullname}. \scope treats the
external inference harness as an explicit state $\Omega_r$ containing typed text,
sampler, verifier, and reward controls. At each development round, an agent proposes a
bounded update. The update is accepted only using a predefined development objective;
otherwise the previous state is retained. The final route, including an exact Frozen
Base fallback, is fixed before held-out evaluation. This design separates three objects
that are often conflated in agentic systems: the proposed update, the deployed control
state, and the score used to evaluate it.

We evaluate \scope from controlled components to the full inference-time system.
SceneLang conditioning, a sampler modification, and supervised best-of-$N$ selection
each improve their matched controls. On the 40-scene Physics-IQ common-base protocol,
the complete procedure reaches a P-IQ score of $34.94$ and improves over exact Base by
$+14.24$ (CI $[+8.10,+21.23]$), although the margin over the strongest matched
Qwen-Image-Agent-style baseline remains unresolved. Cross-backbone experiments on
Wan2.2 and CogVideoX, together with P-AI, OpenS2V-Eval, and prospective evaluations,
show a complementary result: improvements found during development do not reliably
transfer across all backbones, metrics, or unseen tasks. This distinction between
\emph{finding} useful updates and \emph{reliably selecting} them is central to our
analysis.

\paragraph{Contributions.}
Our contributions are threefold:
\begin{itemize}[leftmargin=*,nosep]
    \item We formulate the inference-control evaluation gap for agentic adaptation of
    frozen video world models and separate proposed updates, deployed controls, and
    held-out evaluation.
    \item We introduce \scope, a typed inference-control framework with bounded updates,
    an exact Base fallback, and a score-isolated update rule that prevents held-out
    scores from changing the deployed state.
    \item We provide matched experiments across controls, backbones, and benchmarks.
    \scope improves over exact Base on the matched Physics-IQ protocol, while ablations
    and prospective evaluations identify where component gains transfer and where
    selection remains unreliable.
\end{itemize}

\section{Related Work}
\label{sec:related}

\paragraph{Physics-aware video generation.}
Training-time methods inject physical supervision, imagined trajectories, latent motion
priors, or geometry rewards into model parameters
\citep{physisforcing2026,sift2026,lamo2026,geoalign2026}. Retrieval-based methods instead
augment generation with external physical knowledge~\citep{physrag2026}, while
PhyGround and WorldReasonBench stress-test physical reasoning and future-state
prediction~\citep{phyground2026,worldreasonbench2026}. Our setting instead keeps the video backbone fixed and studies reversible
inference-time controls under matched evaluation.

\paragraph{Inference-time alignment and scaling.}
\textsc{WMReward}, \textsc{VIGOR}, and \textsc{VHS} rank or steer generated samples with
latent-world, geometric, or hidden-state critics
\citep{wmreward2026,vigor2026,vhs2026}; iterative refinement and repeated sampling spend
additional test-time compute through different operators
\citep{ipr2026,brown2024large,ma2025inference}. These methods primarily optimize a particular search or reward mechanism.
\scope instead places prompts, samplers, verifiers, and rewards in a common typed
control space and evaluates each change against a matched protocol-local Base.

\paragraph{Agentic generation and tool orchestration.}
A closer comparison comes from agentic systems that improve generation through feedback,
planning, and reusable tools.
Qwen-Image-Agent closes context gaps through tool-augmented feedback, NEWTON plans
physically grounded videos, and LingBot-World structures interactive world
generation~\citep{qwenimageagent2026,newton2026,lingbotworld2026}. GenEvolve and
COMFYCLAW further organize tool calls and reusable workflow experience for visual
generation~\citep{genevolve2026,comfyclaw2026}. Tool-augmented systems can change retrieval, segmentation, geometry, and proposal
quality simultaneously, making attribution difficult. \scope therefore contributes a
score-isolated update rule rather than a new search, segmentation, or foundation model.
The selected route or Base fallback is fixed before held-out scoring, which enables
matched ablations of individual controls.

\paragraph{Persistent skills and self-evolving harnesses.}
SkillOpt, SkillCAT, and SkillDAG update or select reusable agent skills
\citep{skillopt2026,skillcat2026,skilldag2026}; Library Drift and Harness Updating Is Not
Harness Benefit expose failure modes in evolving skill libraries
\citep{librarydrift2026,harnessbenefit2026}. \scope similarly maintains a persistent typed control state, but restricts updates to
development evidence while keeping validation and confirmation read-only. This
separates the mechanics of updating an inference harness from evidence that the update
improves held-out performance.

\paragraph{Adaptive evaluation and safe updates.}
Reusable-holdout methods study validity under repeated adaptive queries, while
high-confidence policy evaluation asks whether a proposed policy is safe to deploy from
a conservative lower bound~\citep{dwork2015reusable,thomas2015highconfidence}.
These ideas motivate the score-isolation principle in \scope. The method prevents
direct adaptation to held-out scores, while repeated development on the same ledger can
still overfit; prospective evaluation on disjoint tasks therefore remains necessary.


\begin{figure}[t]
\centering
\resizebox{\linewidth}{!}{%
\begin{tikzpicture}[
  font=\scriptsize,
  >=Latex,
  panel/.style={draw=black!18, fill=black!1, rounded corners=4pt, line width=0.45pt},
  corepanel/.style={draw=RoyalPurple!70!black, fill=RoyalPurple!2, rounded corners=4pt, line width=0.9pt},
  evolvepanel/.style={draw=ForestGreen!72!black, fill=ForestGreen!2, rounded corners=4pt, line width=0.9pt},
  box/.style={draw=black!62, fill=white, rounded corners=2.5pt, line width=0.55pt,
              align=center, inner sep=3.5pt, minimum height=0.86cm, text width=2.0cm},
  context/.style={box, draw=black!42, fill=black!2, text width=1.95cm},
  tool/.style={box, draw=black!45, fill=black!2, text width=2.05cm},
  statecore/.style={box, draw=RoyalPurple!78!black, fill=RoyalPurple!9, line width=1.0pt, text width=2.2cm},
  editcore/.style={box, draw=RoyalPurple!65!black, fill=RoyalPurple!6, line width=0.8pt, text width=2.1cm},
  policycore/.style={box, draw=NavyBlue!76!black, fill=NavyBlue!7, line width=0.9pt, text width=2.2cm},
  freezecore/.style={box, draw=NavyBlue!82!black, fill=NavyBlue!10, line width=1.05pt, text width=2.05cm},
  genbox/.style={box, draw=black!75, fill=black!3, text width=2.1cm},
  acceptgate/.style={box, draw=ForestGreen!78!black, fill=ForestGreen!8, line width=1.0pt, text width=2.35cm},
  accept/.style={box, draw=ForestGreen!82!black, fill=ForestGreen!10, line width=1.0pt, text width=2.0cm},
  reject/.style={box, draw=Red!72!black, fill=Red!6, line width=0.9pt, text width=1.85cm},
  badge/.style={circle, draw=white, fill=RoyalPurple!82!black, text=white, font=\tiny\bfseries,
                inner sep=0pt, minimum size=0.42cm},
  badgeblue/.style={badge, fill=NavyBlue!82!black},
  badgegreen/.style={badge, fill=ForestGreen!82!black},
  badgered/.style={badge, fill=Red!78!black},
  invariant/.style={draw=NavyBlue!78!black, fill=NavyBlue!5, rounded corners=2.5pt,
                    line width=0.85pt, align=center, inner sep=3pt, text width=10.8cm,
                    font=\tiny},
  arr/.style={->, draw=black!70, line width=0.6pt, rounded corners=2pt},
  softarr/.style={->, draw=black!45, line width=0.5pt, dashed, rounded corners=3pt},
  looparr/.style={->, draw=ForestGreen!78!black, line width=1.05pt, rounded corners=3pt},
  lbl/.style={font=\tiny, inner sep=1.5pt, fill=white, align=center},
  looplbl/.style={font=\tiny\bfseries, inner sep=2pt, fill=ForestGreen!7,
                  text=ForestGreen!55!black, align=center},
  heading/.style={font=\tiny\bfseries, text=black!58, anchor=west},
  coreheading/.style={font=\tiny\bfseries, text=RoyalPurple!65!black, anchor=west,
                     fill=RoyalPurple!2, inner xsep=2pt, inner ysep=0.5pt},
  evolveheading/.style={font=\tiny\bfseries, text=ForestGreen!55!black, anchor=west,
                       fill=ForestGreen!2, inner xsep=2pt, inner ysep=0.5pt}
]
\node[panel, minimum width=13.15cm, minimum height=1.34cm] at (5.78,0.15) {};
\node[corepanel, minimum width=13.15cm, minimum height=1.48cm] at (5.78,-1.87) {};
\node[evolvepanel, minimum width=13.15cm, minimum height=1.72cm] at (5.78,-4.02) {};
\node[heading] at (-0.55,0.95) {proposal context (replaceable infrastructure)};

\node[context] (input) at (0.18,0.15) {prompt\\first frame};
\node[tool] (qwen) at (2.42,0.15) {VLM feedback\\or Director card};
\node[tool] (tools) at (4.82,0.15) {tool calls\\\texttt{image\_search} /\\\texttt{sam\_segment}};
\node[tool] (ledger) at (7.20,0.15) {frozen ledger\\references, masks,\\event notes};
\node[context] (actions) at (9.72,0.15) {proposal set\\$\mathcal A_s$};
\node[context, text width=1.45cm] (base) at (12.05,0.15) {Base\\fallback};

\node[statecore] (omega) at (0.70,-1.87) {\textbf{typed control state}\\$\Omega_r$: text $\cdot$ noise\\verifier $\cdot$ reward};
\node[editcore] (cards) at (3.42,-1.87) {bounded edits\\add / delete / replace\\retirable cards};
\node[editcore] (guard) at (6.18,-1.87) {provenance + guards\\conflicts, risk,\\Base fallback};
\node[policycore] (inner) at (8.98,-1.87) {score-blind policy\\$q_{\Omega_r}$, captions,\\reward, task evidence};
\node[freezecore] (select) at (11.65,-1.87) {\textbf{freeze before score}\\$a^\star(s)$ or Base};
\node[badge] at (-0.40,-1.31) {1};
\node[badgeblue] at (10.55,-1.31) {2};

\node[genbox] (wm) at (1.15,-4.02) {Frozen Wan2.2 WM\\$p_\theta(y\mid x_0,c)$\\weights fixed};
\node[genbox] (gen) at (4.00,-4.02) {generate mp4\\from frozen\\choice};
\node[acceptgate] (outer) at (6.95,-4.02) {\textbf{development acceptance rule}\\objective improves\\and all guards pass};
\node[accept] (accept) at (9.82,-4.02) {accept $\Omega_{r+1}$\\before held-out\\evaluation};
\node[reject] (reject) at (12.25,-4.02) {reject edit\\preserve $\Omega_r$};
\node[badgegreen] at (5.65,-3.40) {3};

\draw[arr] (input) -- (qwen);
\draw[arr] (qwen) -- (tools);
\draw[arr] (tools) -- (ledger);
\draw[arr] (ledger) -- (actions);
\draw[softarr] (base) -- (actions);
\draw[arr] (actions.south) -- ++(0,-0.55) -| (cards.north);
\draw[arr] (omega) -- (cards);
\draw[arr] (cards) -- (guard);
\draw[arr] (guard) -- (inner);
\draw[arr] (inner) -- (select);
\draw[arr] (select.south) -- ++(0,-0.44) -| (gen.north);
\draw[arr] (wm) -- (gen);
\draw[arr] (gen) -- (outer);
\draw[arr] (outer.east) -- (accept.west);
\draw[arr] (outer.south) -- ++(0,-0.24) -| (reject.south);
\node[lbl] at (11.15,-5.05) {otherwise};
\draw[looparr] (accept.south) -- ++(0,-0.82) -- (-0.95,-5.22) -- (-0.95,-1.87) -- (omega.west);
\node[looplbl] at (4.45,-5.22) {\textbf{3 score-isolated state update:} an accepted edit becomes the next typed state};
\draw[softarr] (reject.north) .. controls (12.35,-3.00) and (12.55,-2.55) .. (select.south east);
\node[invariant] at (6.05,-5.72) {\textbf{4 held-out scores are read-only:} validation / confirmation cannot change $\Omega_R$ or the frozen route $a^\star(s)$.};
\node[badgeblue] at (0.72,-5.72) {4};
\node[coreheading] at (-0.55,-1.03) {\scope CORE A: typed, score-blind control policy};
\node[evolveheading] at (-0.55,-3.08) {\scope CORE B: score-isolated state update};
\end{tikzpicture}%
}
\caption{\textbf{Overview of \scope.}
Proposal mechanisms in the gray band are replaceable. \scope maintains
\textbf{(1)} a typed state over text, noise, verifier, and reward controls;
\textbf{(2)} a score-blind deployment choice with an exact Base fallback;
\textbf{(3)} a development-time state update that accepts or rejects bounded edits; and
\textbf{(4)} a score-isolation constraint that keeps held-out outcomes read-only.
Only the external inference state changes; model weights remain frozen.}
\label{fig:agentic-context}
\label{fig:skillgate}
\end{figure}

\section{Method}
\label{sec:skill}

\subsection{Problem Formulation and Overview}
\label{sec:problem-formulation}

Let $p_{\theta}(x_{1:T}\mid x_0,c)$ denote a pretrained video world model,
where $x_0$ is the initial observation, $c$ is the conditioning signal, and
$x_{1:T}$ is the generated trajectory. Throughout adaptation, the model
parameters $\theta$ remain frozen. Rather than modifying the world model
itself, we adapt an external inference-time control state
\begin{equation}
\Omega \in \mathcal{C},
\label{eq:control-space}
\end{equation}
where $\mathcal{C}$ denotes the space of admissible inference controls.
A control state may contain textual directives, sampling configurations,
verification or abstention rules, and reward-based selectors.

Starting from an initial state $\Omega_0$, an agent proposes a sequence of
bounded edits $\delta_0,\ldots,\delta_{R-1}$. State evolution is governed by
a development-only update operator,
\begin{equation}
\Omega_{r+1}
=
\mathcal{U}_r(\Omega_r,\delta_r;D_r),
\qquad
r=0,\ldots,R-1,
\label{eq:control-state-evolution}
\end{equation}
where $D_r$ is the sealed development ledger available at round $r$.
Each edit modifies a declared component of the current control state, while
$\mathcal{U}_r$ determines whether the edit is committed. Crucially,
$\mathcal{U}_r$ has access only to admissible development evidence; held-out
evaluation outcomes are never valid inputs to the update process.

After $R$ development rounds, the resulting state $\Omega_R$ is frozen.
For an evaluation instance $s$, the frozen state induces the deployment
policy
\begin{equation}
\pi_{\Omega_R}(s)
=
F(\Omega_R,x_s,\mathcal{A}_s),
\label{eq:deployment-policy-overview}
\end{equation}
where $x_s$ denotes the available input and $\mathcal{A}_s$ is the
corresponding candidate set. The complete routing rule, including the exact
Frozen Base fallback, is fixed before the held-out evaluation score is
revealed.

This formulation separates two questions that are often conflated in
inference-time adaptation: \emph{which intervention improves development
behavior}, and \emph{which intervention is admissible for deployment}.
We therefore formulate adaptation as constrained control-state optimization
around a frozen world model, rather than unconstrained search over prompts,
samplers, or candidate outputs.

\subsection{SCOPE: Structured Inference-Time Control}
\label{sec:scope-control}

\scope instantiates the above formulation with an explicit and persistent
control state. At round $r$, we decompose
\begin{equation}
\Omega_r
=
\left(
\Omega_r^{\mathrm{text}},
\Omega_r^{\mathrm{sample}},
\Omega_r^{\mathrm{verify}},
\Omega_r^{\mathrm{select}}
\right),
\label{eq:scope-state}
\end{equation}
where the four components respectively encode textual directives, sampling
controls, verification and abstention rules, and reward-based selection
mechanisms.

The decomposition makes interventions explicit and independently editable.
More importantly, it separates four conceptually distinct operations:
\emph{proposal}, \emph{commitment}, \emph{deployment}, and
\emph{evaluation}. At each round, a proposer suggests a bounded modification
to one control axis; the update rule determines whether that modification
enters the persistent state; the final frozen state determines deployment;
and held-out evaluation measures the resulting policy without modifying it.

Formally, an axis-specific proposer generates
\begin{equation}
\delta_r
=
P_r(\Omega_r,D_r),
\label{eq:proposal-rule}
\end{equation}
where $P_r$ may itself use VLM feedback, Director modules, retrieval,
external tools, or predefined proposal families. \scope does not constrain
the internal form of $P_r$; instead, it specifies the interface through which
a proposal is allowed to modify persistent inference-time behavior.

Some components of $\Omega_r$ may themselves be learned during development.
For example, the reward selector can be trained on disjoint Physics-IQ
scenes. Once development terminates, however, its parameters and inputs are
treated as part of the frozen control state. At deployment it consumes only
label-free features and cannot be updated using held-out outcomes.

Figure~\ref{fig:agentic-context} summarizes this separation between proposal
generation, persistent control-state adaptation, frozen deployment, and
held-out evaluation.

\subsection{Development-Only Control-State Updates}
\label{sec:control-update}

We restrict each proposal $\delta_r$ to a bounded elementary
\emph{add}, \emph{delete}, or \emph{replace} operation on one declared
control axis. This restriction makes individual state changes attributable
and prevents a single proposal from silently modifying multiple components.

Let $J_{\mathrm{dev}}(\Omega;D_r)$ denote the predeclared development
objective evaluated on ledger $D_r$. We define the development contrast of
proposal $\delta_r$ as
\begin{equation}
\Delta_r^{\mathrm{dev}}(D_r)
=
J_{\mathrm{dev}}
\bigl(
U_r(\Omega_r,\delta_r);D_r
\bigr)
-
J_{\mathrm{dev}}
\bigl(
\Omega_r;D_r
\bigr),
\label{eq:development-contrast}
\end{equation}
where $U_r(\Omega_r,\delta_r)$ denotes the state obtained by applying the
proposed edit before acceptance testing.

In addition to improving the development objective, a proposal must satisfy
a predefined collection of admissibility constraints. Let
\begin{equation}
G_{r,k}
\bigl(
U_r(\Omega_r,\delta_r);D_r
\bigr)
\leq 0,
\qquad
k=1,\ldots,K,
\label{eq:admissibility-constraints}
\end{equation}
represent the individual guards, including motion, sharpness, identity
preservation, and static-video checks. We summarize their conjunction by
\begin{equation}
G_r(D_r)
=
\prod_{k=1}^{K}
\mathbbm{1}
\left[
G_{r,k}
\bigl(
U_r(\Omega_r,\delta_r);D_r
\bigr)
\leq 0
\right].
\label{eq:aggregate-guard}
\end{equation}

The persistent state evolves through the commit-or-retain rule
\begin{equation}
\Omega_{r+1}
=
\begin{cases}
U_r(\Omega_r,\delta_r),
&
\Delta_r^{\mathrm{dev}}(D_r)>0
\ \land\
G_r(D_r)=1,
\\[1mm]
\Omega_r,
&
\text{otherwise}.
\end{cases}
\label{eq:scope-commit}
\end{equation}

Equation~\ref{eq:scope-commit} gives every persistent intervention an explicit
acceptance criterion: an edit is committed only when it improves its
predeclared development contrast and satisfies every guard. Otherwise, the
incumbent is retained exactly.

Importantly, this rule does not assume that development improvement transfers
monotonically to unseen tasks, environments, or model backbones. Its purpose
is narrower: to specify which evidence is permitted to modify the deployed
system and to prevent held-out evaluation feedback from becoming an implicit
optimization signal.

\subsection{Frozen Deployment and Score Isolation}
\label{sec:score-isolation}

After the final development round, adaptation terminates and the state
$\Omega_R$ is frozen. For each evaluation instance $s$, the inference
procedure constructs an admissible candidate set
\begin{equation}
\mathcal{A}_s
=
\left\{
a_s^{(1)},\ldots,a_s^{(M)},a_s^{\mathrm{Base}}
\right\},
\label{eq:candidate-set}
\end{equation}
where $a_s^{\mathrm{Base}}$ is the exact output of the Frozen Base system and
is always retained as a fallback.

The deployed route is then fixed as
\begin{equation}
a_s^\star
=
\pi_{\Omega_R}(s)
=
F(\Omega_R,x_s,\mathcal{A}_s),
\label{eq:frozen-route}
\end{equation}
before the official evaluation score $y_s$ is revealed. Equivalently, the
complete deployment mapping
\begin{equation}
\Phi_{\Omega_R}:
(s,\mathcal{A}_s)
\longmapsto
a_s^\star
\label{eq:frozen-deployment-map}
\end{equation}
is precommitted before held-out evaluation.

Optional MAGE-lite memory may supply capability, task, experience, or
environment records to $F$. Such records are treated as part of the frozen
inference state: they may be constructed or updated during development, but
cannot be modified using held-out outcomes.

This ordering yields a simple score-isolation property.

\begin{proposition}[Score-isolation invariant]
\label{prop:score-isolation}
Fix the proposal randomness, sealed development ledgers, candidate
construction procedure, world model, evaluation inputs, and fallback rule.
Replacing the complete validation, confirmation, held-out, or official-score
ledger by arbitrary alternative values leaves both the final control state
$\Omega_R$ and every frozen deployment route $a_s^\star$ unchanged prior to
score release.
\end{proposition}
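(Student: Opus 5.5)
The argument is a functional-dependence (data-flow) argument. Write $L_{\mathrm{held}}$ for the collection of validation, confirmation, held-out, and official-score ledgers. We show that both $\Omega_R$ and each $a_s^\star$ can be written as functions whose arguments are only the fixed quantities listed in the statement. Since $L_{\mathrm{held}}$ does not appear among those arguments, substituting arbitrary alternative values for it cannot change the outputs.

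\textbf{Step 1: the state update.} We argue by induction on $r$ that $\Omega_r = \Psi_r(\Omega_0, D_0,\ldots,D_{r-1}, \xi)$, where $\xi$ is the fixed proposal randomness. The base case $r=0$ is immediate, because $\Omega_0$ is fixed.

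For the inductive step, we look at each ingredient of the commit-or-retain rule~\eqref{eq:scope-commit}:
\begin{itemize}[leftmargin=*,nosep]
  \item The proposal $\delta_r = P_r(\Omega_r, D_r)$ from~\eqref{eq:proposal-rule} depends only on $\Omega_r$, $D_r$, and $\xi$.
  \item The tentative state $U_r(\Omega_r,\delta_r)$ is a deterministic edit of one declared axis.
  \item The development contrast $\Delta_r^{\mathrm{dev}}(D_r)$ in~\eqref{eq:development-contrast} is computed from $J_{\mathrm{dev}}(\cdot\,;D_r)$.
  \item The aggregate guard $G_r(D_r)$ in~\eqref{eq:aggregate-guard} is computed from the guards $G_{r,k}(\cdot\,;D_r)$.
\end{itemize}
Sealed development ledgers are disjoint from $L_{\mathrm{held}}$ by construction, so none of these quantities reads $L_{\mathrm{held}}$. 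Hence $\Omega_{r+1}$ is a function of $(\Omega_r, D_r, \xi)$ alone, which closes the induction. Taking $r=R$ shows that $\Omega_R$ is invariant to $L_{\mathrm{held}}$.

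This step also covers learned components such as the reward selector and MAGE-lite memory. They are fitted during development and therefore belong to $\Omega_R$, so the same argument applies to them.

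\textbf{Step 2: the frozen route.} The candidate set $\mathcal{A}_s$ in~\eqref{eq:candidate-set} is produced by three fixed ingredients: the fixed construction procedure, the frozen world model $p_\theta$, and the fixed input $x_s$. The Base fallback $a_s^{\mathrm{Base}}$ is likewise a function of the frozen base system and $x_s$.

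By~\eqref{eq:frozen-route}, we have $a_s^\star = F(\Omega_R, x_s, \mathcal{A}_s)$. Every argument of $F$ has now been shown invariant to $L_{\mathrm{held}}$. Moreover, $F$ consumes only label-free features, so $y_s$ is not one of its inputs. It follows that $a_s^\star$ is unchanged. Because this holds for every $s$, the whole map $\Phi_{\Omega_R}$ in~\eqref{eq:frozen-deployment-map} is unchanged as well.

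\textbf{Main obstacle.} The difficulty is not computational. It lies in making precise that no held-out quantity enters any $D_r$, $P_r$, guard, or memory record. That is, the sealing and ordering assumptions must be stated as explicit non-dependence conditions. The qualifier ``prior to score release'' is what rules out temporal leakage: the route is fixed before $y_s$ exists in the computation.

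My plan is to formalize each ledger and procedure as a measurable map whose declared argument list excludes $L_{\mathrm{held}}$, and to note that stochastic components are fixed by conditioning on $\xi$. With these formalizations in place, the result is an immediate composition of functions that do not take $L_{\mathrm{held}}$ as an argument.
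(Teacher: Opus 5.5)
Your proposal is correct and follows the paper's own argument: an induction over rounds showing that each proposal $\delta_r$ and each commit-or-retain decision depend only on $\Omega_r$, $D_r$, and the fixed proposal randomness, so $\Omega_R$ is invariant, after which $a_s^\star=F(\Omega_R,x_s,\mathcal{A}_s)$ inherits the invariance. The extra steps you add are arguing that $\mathcal{A}_s$ is itself invariant (it is built from the fixed construction procedure, frozen model, and inputs) and placing the learned selector and memory inside $\Omega_R$. These make explicit what the paper's proof leaves implicit, without changing the approach.
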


\begin{proof}
By Equation~\ref{eq:proposal-rule}, proposal $\delta_r$ is determined only by
the incumbent $\Omega_r$, sealed development ledger $D_r$, and fixed proposal
randomness. Equation~\ref{eq:scope-commit} likewise depends only on
development quantities associated with the current round. Therefore, by
induction over $r$, the complete state trajectory
$\Omega_0,\ldots,\Omega_R$ is invariant to any modification of held-out
scores. Equation~\ref{eq:frozen-route} subsequently depends only on the
frozen state $\Omega_R$, evaluation input $x_s$, candidate set
$\mathcal{A}_s$, and the fixed fallback rule. Hence every route $a_s^\star$
inherits the same invariance.
\end{proof}

The proposition establishes a deliberately narrow guarantee: held-out scores
cannot influence either control-state adaptation or deployment routing.
It does not imply that the proposal mechanism is optimal, that development
selection is unbiased, that development gains generalize to unseen settings,
or that \scope necessarily outperforms a matched alternative. These are
empirical questions addressed separately in Section~\ref{sec:experiments}.

\subsection{Provenance-Bound Update Records}
\label{sec:provenance}

Score isolation specifies which information may influence adaptation;
provenance binding makes the resulting state evolution auditable. We bind
each attempted transition to the exact incumbent state, proposal, and
development evidence from which it was produced.

Let $h(\cdot)$ denote the SHA-256 hash of a canonical serialization of its argument, let $b_r\in\{0,1\}$ denote the
commit decision, and let $\mathcal{R}_r$ contain the frozen outcomes and
reasons of all admissibility guards. Each attempted transition emits the
record
\begin{equation}
\mathcal{T}_r
=
\Bigl(
h(\Omega_r),
h(\delta_r),
h(D_r),
b_r,
\mathcal{R}_r,
h(\Omega_{r+1})
\Bigr).
\label{eq:transition-record}
\end{equation}

A proposal is admissible only if its recorded parent agrees with the current
incumbent,
\begin{equation}
h(\Omega_r^{\mathrm{parent}})
=
h(\Omega_r).
\label{eq:parent-hash}
\end{equation}
A committed proposal modifies exactly one declared control axis and advances
the corresponding revision. A rejected or retired proposal instead restores
the incumbent exactly, which implies
\begin{equation}
b_r=0
\quad\Longrightarrow\quad
\Omega_{r+1}=\Omega_r
\quad\Longrightarrow\quad
h(\Omega_{r+1})=h(\Omega_r).
\label{eq:retirement-invariant}
\end{equation}

Consequently, $\mathcal{T}_r$ distinguishes an edit that was merely proposed
from one that actually modified the persistent state, while binding every
committed change to its parent state, development evidence, and gate
decision.

\paragraph{Evidence roles.}
Only evidence explicitly designated for development and sealed before
evaluation is admissible to Equations~\ref{eq:proposal-rule}--
\ref{eq:scope-commit}. Validation, confirmation, official-score, held-out,
and test fields are excluded from the executable update interface. Each
admissible transition records the development-ledger hash, guard outcomes,
commit decision, and resulting state hash.

This distinction has a direct operational consequence. Altering any held-out
score leaves the admissible inputs to Equation~\ref{eq:scope-commit}
unchanged and therefore cannot alter the deployed state. Altering development
evidence, in contrast, changes $h(D_r)$ and defines a distinct
provenance-bound transition that must be evaluated independently. The
resulting procedure therefore provides both \emph{score isolation}---held-out
outcomes cannot drive adaptation---and \emph{transition traceability}---every
persistent state change can be replayed from its recorded parent, proposal,
evidence, and decision.

\section{Experiments}
\label{sec:experiments}

\subsection{Experimental Setup}
We evaluate \scope with two frozen video backbones, Wan2.2 and CogVideoX, across
Physics-IQ, P-AI (PAI-Bench-G), and OpenS2V-Eval. The main Physics-IQ comparison uses
40 scenes and reports scene-level composite scores. P-AI evaluates the matched methods on the robot split6 tasks with the official Qwen2.5-VL-72B judge and an
eight-dimensional quality score. OpenS2V-Eval contains $180$ items per backbone and is
used as a broader open-domain test. We additionally use fresh Physics-IQ and PhyGround
tasks for prospective evaluation.

All comparisons are performed within the same backbone and evaluation protocol.
Unless noted otherwise, deltas are paired against the corresponding Frozen Base and
uncertainty is estimated using a paired bootstrap over the protocol's independent unit.
The generator, candidate pool, deployment route, and analysis unit are fixed before
held-out outcomes are read. Secondary and post-hoc analyses are reported descriptively
rather than pooled with the primary comparison. Full protocol and statistical details
are provided in Appendix Table~\ref{tab:governance}.

\subsection{Main Results on Physics-IQ}
\label{sec:physiq-common-base}
Our primary experiment asks two questions under the same 40-scene protocol:
whether \scope improves over the exact Frozen Base and whether it improves over the
strongest matched alternative.

\ScopeCommonBaseResultParagraph

The result is clear relative to Frozen Base but less conclusive relative to the
strongest matched alternative. The direct \scope--Qwen comparison changes only
$8/40$ scenes ($3/32/5$ wins/ties/harms), which explains the wider uncertainty in that
contrast. Table~\ref{tab:headline} therefore reports the scene-equal estimand. Because
the official Physics-IQ composite is nonlinear, absolute rankings across alternative
aggregation schemes are not inferred from scene composites alone.

\csname @@input\endcsname scope_common_base_first_four_rounds_generated.tex

\begin{table}[t]
\centering
\scriptsize
\caption{\textbf{Common-base Physics-IQ comparison across two frozen video backbones.}
\ScopeCommonBaseTableCaptionClaim}
\label{tab:headline}
\label{tab:physiq-common-base}
\setlength{\tabcolsep}{1.6pt}
\resizebox{\linewidth}{!}{%
\begin{tabular}{@{}lcccc@{}}
\toprule
\textbf{Method} & \textbf{Wan P-IQ} & \textbf{Wan $\Delta$ (95\% CI)} & \textbf{CogVideoX P-IQ} & \textbf{CogVideoX $\Delta$ (95\% CI)}\\
\midrule
\csname @@input\endcsname physiq_common_base_crossbackbone_rows.tex
\bottomrule
\end{tabular}
}
\end{table}

\begin{figure}[H]
\centering
\includegraphics[width=\linewidth]{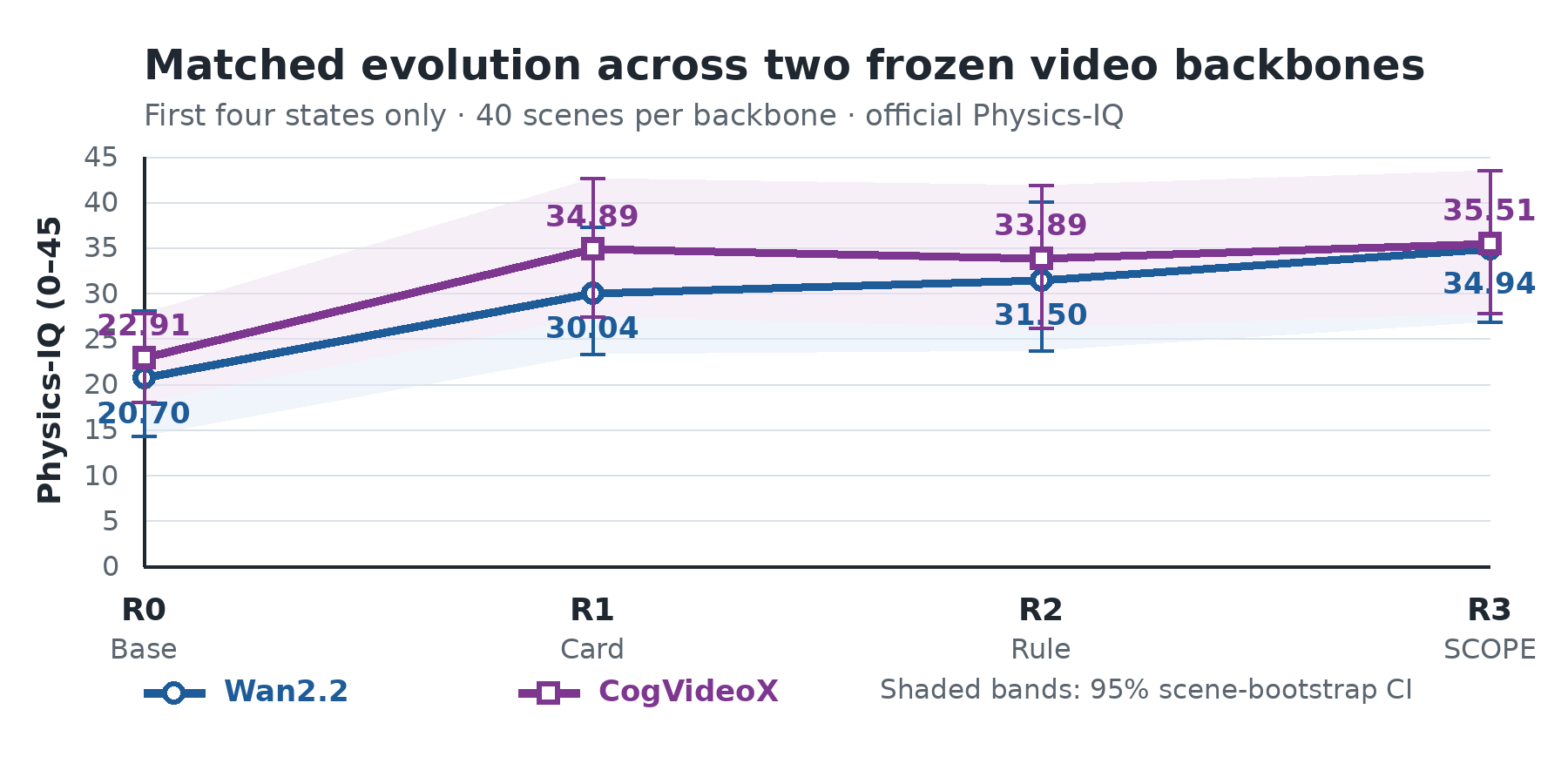}
\caption{\ScopeEvolutionCaption}
\label{fig:scope-common-base-evolution}
\end{figure}

\ScopeEvolutionParagraph



\begin{figure}[t]
\centering
\includegraphics[width=\linewidth]{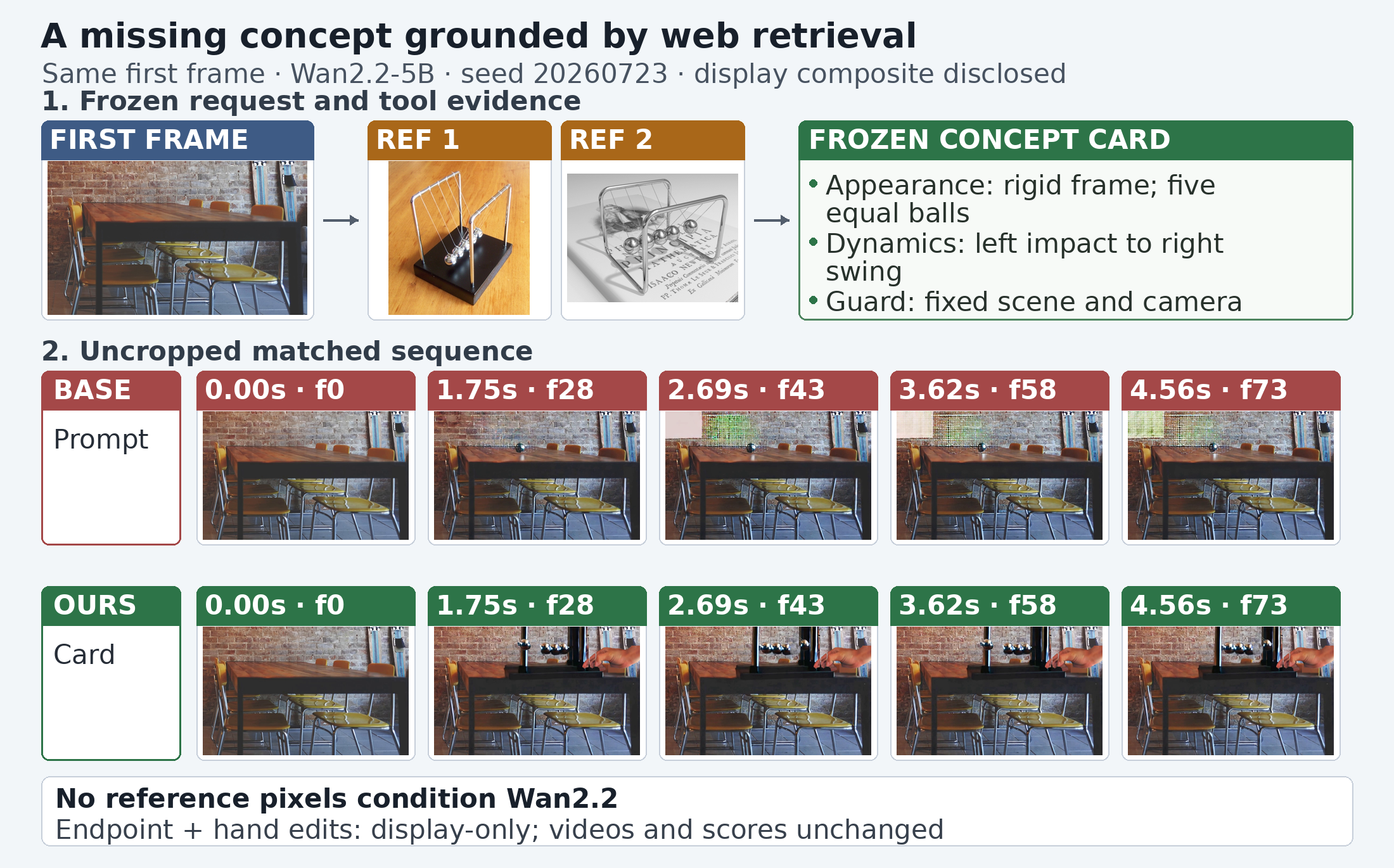}
\caption{\textbf{Example of retrieval-assisted generation on Newton's cradle.}
Retrieval provides a textual concept card but no reference pixels or visual
embeddings. Both Wan2.2 variants use the same temporal anchoring operation.
The visualization for OURS additionally includes the disclosed endpoint-motion
intervention and display-only hand replacements. This example is qualitative
and is not used as evidence for the aggregate tool-use effect.}
\label{fig:newton-web-case}
\end{figure}

\subsection{Cross-Backbone Generalization}
\label{sec:pai-crossbackbone}

We next study whether the observed behavior transfers when both the video
backbone and task distribution change. Table~\ref{tab:pai-headline} evaluates
all methods on the same robot split6 tasks using the official
Qwen2.5-VL-72B judge and the same eight-dimensional QS evaluation contract.

\begin{table}[t]
\centering
\scriptsize
\caption{\textbf{Cross-backbone results on P-AI (PAI-Bench-G).}
Deltas and paired 95\% CIs are relative to the Frozen Base of the same backbone.
SceneLang is the strongest fixed-policy comparator. The diagnostic \scope rows
report the better of two variants per backbone and therefore do not represent a
single transferable policy.}
\label{tab:pai-headline}
\setlength{\tabcolsep}{1.8pt}
\resizebox{\linewidth}{!}{%
\begin{tabular}{@{}lcccc@{}}
\toprule
\textbf{Matched method}
& \textbf{Wan Overall}
& \textbf{Wan $\Delta$ (95\% CI)}
& \textbf{CogVideoX Overall}
& \textbf{CogVideoX $\Delta$ (95\% CI)}\\
\midrule
\csname @@input\endcsname pai_crossbackbone_rows.tex
\bottomrule
\end{tabular}
}
\end{table}

Among the fixed policies, SceneLang achieves the highest point estimate on both
backbones ($77.02$ on Wan and $74.72$ on CogVideoX), although its Base-relative
confidence intervals include zero. The diagnostic \scope upper envelope gives
$77.47$ on Wan using the shared-global variant and $74.48$ on CogVideoX using
the no-global variant. Because these values are obtained by different variants,
they should not be interpreted as the performance of a single transferable
policy.

The shared-global variant further illustrates this backbone dependence. It
changes Overall performance by $+0.80$ (CI $[-1.38,+3.12]$) relative to Frozen
Base on Wan2.2, but by $-2.41$ (CI $[-5.04,+0.13]$) on CogVideoX. Although
neither interval is conclusive, the reversal in point estimates suggests that
the effectiveness of this component depends on the underlying backbone.
Complete Wan profiles are provided in Appendix Table~\ref{tab:pai-8d}.

\subsection{Open-Domain Generalization on OpenS2V-Eval}
\label{sec:opens2v-crossbackbone}

We next evaluate cross-backbone generalization in the broader open-domain setting
of OpenS2V-Eval. Within each backbone, all methods share the same frozen sampler
configuration, and routing decisions are fixed before evaluation.

\begin{table}[t]
\centering
\footnotesize
\caption{\textbf{Cross-backbone results on OpenS2V-Eval
($n{=}180$ per backbone).}
Scores are multiplied by $100$ and compared only within each backbone.
Mean$^{\dagger}$ and Weighted$^{\dagger}$ use the disclosed GPT-5.4 judge with
a Qwen fallback proxy. $^{\ddagger}$ denotes a prompt-only LingBot-style proxy.}
\label{tab:opens2v-headline}
\setlength{\tabcolsep}{2pt}
\resizebox{\linewidth}{!}{%
\begin{tabular}{@{}lcccc@{}}
\toprule
\textbf{Method}
& \textbf{Wan Mean$^{\dagger}$}
& \textbf{Wan Weighted$^{\dagger}$}
& \textbf{CogVideoX Mean$^{\dagger}$}
& \textbf{CogVideoX Weighted$^{\dagger}$}\\
\midrule
\csname @@input\endcsname opens2v_crossbackbone_rows.tex
\bottomrule
\end{tabular}
}
\end{table}
For a matched agentic baseline, we construct a LingBot-style prompt proxy that
injects a Qwen3.5-9B caption of the first frame into a fixed
Director$\rightarrow$Pilot template, while keeping the video backbone and
sampler unchanged. This baseline isolates the contribution of structured
agentic prompting within our evaluation protocol; it is not intended to
reproduce the trained LingBot-World system.

On Wan, \scope improves over this proxy by $+0.47$ in Mean
(CI $[-0.45,+1.43]$) and $+0.33$ in Weighted score
(CI $[-0.93,+1.62]$). On CogVideoX, the corresponding differences are
$+0.37$ (CI $[-0.48,+0.98]$) and $-0.40$
(CI $[-1.39,+0.51]$), respectively. All four confidence intervals include zero,
so these contrasts remain statistically unresolved. Notably, Frozen Base
achieves the highest Mean score on CogVideoX, whereas the LingBot-style proxy
slightly exceeds \scope on the Weighted metric. Taken together, these results
indicate that the gains from learned inference-time control are
backbone- and metric-dependent, rather than uniformly transferable across
configurations.

More broadly, the direction of the effect is not fully stable across
backbones. The Mean difference relative to the strongest comparator is slightly
positive on both Wan ($+0.47$) and CogVideoX ($+0.37$), whereas the Weighted
difference changes from $+0.33$ to $-0.40$. P-AI exhibits an even larger
directional change, from $+0.80$ to $-2.41$ relative to the same-backbone Base.
These results suggest a backbone--metric interaction rather than a
backbone-independent improvement.

Taken together, Physics-IQ, P-AI, and OpenS2V-Eval show that \scope is compatible
with multiple backbones and evaluation settings, but the effect of a particular control
is not invariant to backbone or metric. We therefore report cross-backbone results
separately rather than pooling scores across heterogeneous protocols.


\subsection{Prospective Evaluation on Unseen Tasks}
\label{sec:prospective-eval}

The previous experiments evaluate matched controls and cross-backbone behavior. We next
test whether development-time improvements persist when the selection policy is frozen
and applied to unseen tasks.

\begin{table}[t]
\centering
\small
\caption{\textbf{Prospective evaluations.}
Positive effects favor the proposed method; PhyGround reports proposal coverage
against its predeclared minimum.}
\label{tab:prospective-tests}
\setlength{\tabcolsep}{6pt}
\begin{tabular}{@{}llc@{}}
\toprule
\textbf{Evaluation} & \textbf{Reference / Criterion} & \textbf{Outcome} \\
\midrule
Fresh Physics-IQ
    & Base
    & $+0.096\;[-0.887,+1.077]$ \\
    & Frozen router
    & $-1.019\;[-2.159,+0.146]$ \\
\addlinespace[2pt]

Typed update
    & Random update
    & $-0.316\;[-5.984,+5.145]$ \\
\addlinespace[2pt]

Tool router
    & Always-on
    & $\mathbf{+0.660}\;[+0.016,+1.315]$ \\
    & No tool
    & $+0.083\;[-0.191,+0.381]$ \\
    & Random routing
    & $+0.002\;[-0.281,+0.299]$ \\
\addlinespace[2pt]

PhyGround
    & Min.\ proposal coverage: $8/32$
    & $4/32$ \\
\bottomrule
\end{tabular}
\end{table}

Table~\ref{tab:prospective-tests} summarizes four prospective tests.
On fresh Physics-IQ tasks, neither the comparison with Base nor that with the
frozen router establishes a clear improvement. The typed updater likewise does
not outperform a matched random update and degrades $15\%$ of scenes, exceeding
the predeclared $10\%$ tolerance. The grounded tool router improves over
always-on tool use, whereas its contrasts with no-tool and same-budget random
routing remain unresolved. PhyGround exposes a different failure mode: only
$4/32$ tasks produce a valid proposal, below the predeclared minimum coverage of
$8/32$. The proposal stage therefore fails its coverage gate before downstream
tool quality can be meaningfully assessed. Together, these results show that
development-time component gains do not yet yield robust improvement under
prospective evaluation.

\subsection{Ablation Studies}
\label{sec:main-ablation}

We analyze three complementary aspects of \scope: individual control axes,
control-state updates, and external tool use. Table~\ref{tab:ablation} summarizes
all matched ablations. Because these experiments use different controls and
evaluation units, their effects are interpreted independently rather than
combined additively.

\begin{table}[t]
\centering
\footnotesize
\caption{\textbf{Ablation studies of \scope.}
Each row reports a matched comparison and its 95\% CI. Component effects are
protocol-specific and should not be added across rows.}
\label{tab:ablation}
\setlength{\tabcolsep}{4.5pt}
\renewcommand{\arraystretch}{1.02}
\begin{tabular}{@{}lllc@{}}
\toprule
\textbf{Category} & \textbf{Variant} & \textbf{Comparison} &
\textbf{$\Delta$ (95\% CI)} \\
\midrule

\multirow{5}{*}{Control}
& Text (SceneLang)     & vs.\ Base
& $+5.62\;[+0.55,+10.55]$ \\
& Noise                & vs.\ fixed sampler
& $+1.26\;[+0.52,+2.01]$ \\
& Reward               & vs.\ matched random
& $+3.33\;[+1.49,+5.30]$ \\
& Reward               & vs.\ self-score
& $+0.66\;[-1.52,+2.92]$ \\
& Region-state prior   & identity metric
& $-7.28\;[-13.33,-2.53]$ \\

\midrule

\multirow{4}{*}{Update}
& Random update        & vs.\ $\Omega_0$
& $+4.53\;[+1.14,+9.17]$ \\
& Typed update         & vs.\ $\Omega_0$
& $+4.22\;[+0.81,+8.30]$ \\
& Typed update         & vs.\ random update
& $-0.32\;[-5.98,+5.15]$ \\
& No fallback          & vs.\ Base fallback
& $-0.773\;[-1.471,-0.203]$ \\

\midrule

\multirow{3}{*}{Tool}
& Retrieval            & vs.\ no tool
& $+0.562\;[-0.700,+1.967]$ \\
& Retrieval + SAM      & vs.\ no tool
& $+0.013\;[-2.936,+3.263]$ \\
& Incremental SAM      & vs.\ retrieval
& $-0.548\;[-3.390,+2.631]$ \\

\bottomrule
\end{tabular}
\end{table}

\paragraph{Component controls.}
The text and noise controls both improve their matched baselines, and the learned
reward selector outperforms matched random selection. Its advantage over self-score,
however, remains unresolved. The SceneLang interval is conditional on three fixed
seeds. The reward selector is trained with scene-grouped out-of-fold supervision and
uses only label-free features at inference. In contrast, the region-state prior
degrades the primary identity metric and is therefore excluded from the final control
state.

\paragraph{Control-state updates.}
Both the random and typed updates improve over the frozen incumbent $\Omega_0$,
indicating that updating the control state can be beneficial. The typed update,
however, does not outperform an equal-budget random update and degrades $15\%$ of
scenes, so the current experiment does not establish an advantage for typed transfer
itself. Removing the Base fallback produces a clear degradation, supporting fallback
as a conservative mechanism when an update is unreliable. The full $+14.24$
Physics-IQ improvement is an integrated system result and should not be interpreted
as the sum of these local ablations.

\paragraph{Tool use.}
Retrieval supplies frozen textual evidence rather than reference pixels or visual
embeddings, while SAM contributes first-frame geometric information without semantic
labels. None of the three tool-use contrasts is statistically resolved. Because the
evaluation contains multiple views nested within each scene, we use the scene-paired
analysis as the primary estimand. These results indicate that the overall Physics-IQ
gain cannot be attributed to tool use alone.

\section{Limitations}
\label{sec:limitations}

Our experiments expose three main limitations of the current approach.

First, strong candidate proposals do not necessarily imply reliable deployment
decisions. Several individual controls improve under matched evaluation, and the
complete \scope policy substantially improves over Frozen Base on Physics-IQ.
However, prospective experiments show that selecting when to apply these controls
remains substantially harder than generating useful candidates. In particular,
selection errors and insufficient proposal coverage limit performance on unseen tasks.

Second, the effectiveness of inference-time controls is not fully invariant across
backbones or metrics. Updates that improve one configuration can have smaller effects,
or even reverse direction, under another backbone. This limits the extent to which a
control learned in one setting can currently be treated as a transferable adaptation
rule.

Third, \scope relies on development-time evidence to decide which updates may enter
the deployed control state. Although held-out scores cannot directly modify the policy,
the quality of the resulting policy still depends on the diversity and representativeness
of the development tasks. More reliable adaptation will therefore require stronger
task-disjoint proposal generation, calibrated uncertainty estimates, and selectors
that generalize under distribution shift.

The Frozen Base fallback partially mitigates these limitations by allowing the system
to abstain from uncertain updates rather than forcing every proposal to be deployed.
Nevertheless, improving the reliability of this decision remains a central direction
for future work.

\section{Conclusion}
\label{sec:conclusion}

We introduce \scope, a framework for inference-time adaptation of frozen video world
models that explicitly separates proposal, update, deployment, and evaluation.
\scope represents prompts, sampler configurations, verifiers, and reward selectors
within a structured control state, updates this state using development evidence, and
freezes the resulting deployment policy before held-out evaluation.

On the Physics-IQ benchmark, \scope substantially improves over the exact
Frozen Base on both evaluated backbones. Controlled ablations further identify useful
contributions from scene specification, sampling, and learned selection, while the
advantage over the strongest matched agentic alternative remains unresolved.
Cross-backbone and prospective evaluations show that these gains are not uniformly
transferable, highlighting reliable selection under distribution shift as the main
remaining challenge.

More broadly, our results suggest that improving an agentic inference system requires
more than generating better candidate interventions. It also requires a principled
mechanism for deciding which interventions should become part of the deployed policy.
By making this distinction explicit, \scope provides a foundation for studying
inference-time adaptation as a controlled optimization problem rather than an
unstructured sequence of post-hoc improvements.

\clearpage
\bibliographystyle{iclr2027_conference}
\bibliography{references}

\begin{thebibliography}{27}
\providecommand{\natexlab}[1]{#1}
\providecommand{\url}[1]{\texttt{#1}}
\expandafter\ifx\csname urlstyle\endcsname\relax
  \providecommand{\doi}[1]{doi: #1}\else
  \providecommand{\doi}{doi: \begingroup \urlstyle{rm}\Url}\fi

\bibitem[Bai et~al.(2026)Bai, Wan, Zhou, Yu, You, and Tsang]{skilldag2026}
Tong Bai, Zhenglin Wan, Pengfei Zhou, Xingrui Yu, Yang You, and Ivor~W. Tsang.
\newblock {SkillDAG}: Self-evolving typed skill graphs for {LLM} skill selection at scale.
\newblock \emph{arXiv preprint arXiv:2606.03056}, 2026.

\bibitem[Brown et~al.(2024)Brown, Juravsky, Ehrlich, Clark, Le, R{\'e}, and Mirhoseini]{brown2024large}
Bradley Brown, Jordan Juravsky, Ryan Ehrlich, Ronald Clark, Quoc~V. Le, Christopher R{\'e}, and Azalia Mirhoseini.
\newblock Large language monkeys: Scaling inference compute with repeated sampling.
\newblock \emph{arXiv preprint arXiv:2407.21787}, 2024.

\bibitem[Bucciarelli et~al.(2026)Bucciarelli, Turri, Baraldi, Cornia, and Cucchiara]{vhs2026}
Davide Bucciarelli, Evelyn Turri, Lorenzo Baraldi, Marcella Cornia, and Rita Cucchiara.
\newblock Tiny inference-time scaling with latent verifiers.
\newblock \emph{arXiv preprint arXiv:2603.22492}, 2026.

\bibitem[Chen et~al.(2026{\natexlab{a}})Chen, Zhong, Liu, and Du]{skillcat2026}
Kunfeng Chen, Qihuang Zhong, Juhua Liu, and Bo~Du.
\newblock {SkillCAT}: Contrastive assessment and topology-aware skill self-evolution for {LLM} agents.
\newblock \emph{arXiv preprint arXiv:2606.13317}, 2026{\natexlab{a}}.

\bibitem[Chen et~al.(2026{\natexlab{b}})Chen, Xing, Ye, Geng, Lin, Lai, He, Zhai, Gao, and Zhu]{genevolve2026}
Sixiang Chen, Zhaohu Xing, Tian Ye, Xinyu Geng, Yunlong Lin, Jianyu Lai, Xuanhua He, Fuxiang Zhai, Jialin Gao, and Lei Zhu.
\newblock {GenEvolve}: Self-evolving image generation agents via tool-orchestrated visual experience distillation.
\newblock \emph{arXiv preprint arXiv:2605.21605}, 2026{\natexlab{b}}.

\bibitem[Cheng et~al.(2026)Cheng, Liu, Gao, Song, and Tang]{physrag2026}
Kexu Cheng, Zicheng Liu, Mingju Gao, Chunhe Song, and Hao Tang.
\newblock {PhysRAG}: Enhancing physics-awareness in video generation via retrieval-augmented generation.
\newblock In \emph{European Conference on Computer Vision}, 2026.

\bibitem[Dwork et~al.(2015)Dwork, Feldman, Hardt, Pitassi, Reingold, and Roth]{dwork2015reusable}
Cynthia Dwork, Vitaly Feldman, Moritz Hardt, Toniann Pitassi, Omer Reingold, and Aaron Roth.
\newblock The reusable holdout: Preserving validity in adaptive data analysis.
\newblock \emph{Science}, 349\penalty0 (6248):\penalty0 636--638, 2015.

\bibitem[Feng et~al.(2026)Feng, Wang, Xu, Qian, Wang, Hou, Liu, Sun, Liu, and Wang]{newton2026}
Yuxiang Feng, Juncheng Wang, Chao Xu, Yijie Qian, Huihan Wang, Wenlong Hou, Yang Liu, Baigui Sun, Yong Liu, and Shujun Wang.
\newblock {NEWTON}: Agentic planning for physically grounded video generation.
\newblock \emph{arXiv preprint arXiv:2605.18396}, 2026.

\bibitem[Gao et~al.(2026)Gao, Wang, Zhu, Chen, Liu, Bai, Wang, Yuan, Wang, Lu, Cheng, Zhang, Gao, Feng, Liu, Yao, Xu, Zhu, Shen, and Ouyang]{lingbotworld2026}
Zelin Gao, Qiuyu Wang, Jiapeng Zhu, Jingye Chen, Zichen Liu, Qingyan Bai, Jiahao Wang, Yufeng Yuan, Hanlin Wang, Yichong Lu, Ka~Leong Cheng, Haojie Zhang, Jian Gao, Tianrui Feng, Yuzheng Liu, Yao Yao, Yinghao Xu, Xing Zhu, Yujun Shen, and Hao Ouyang.
\newblock Infinite worlds with versatile interactions.
\newblock \emph{arXiv preprint arXiv:2607.07534}, 2026.

\bibitem[Jiang et~al.(2026)Jiang, Meng, Hu, Xie, Xu, and Zhan]{lamo2026}
Bo~Jiang, Depu Meng, Yihan Hu, Yichen Xie, Tianshuo Xu, and Wei Zhan.
\newblock {LaMo}: Self-supervised latent motion priors for physical realism in video generation.
\newblock \emph{arXiv preprint arXiv:2605.23878}, 2026.

\bibitem[Kang et~al.(2026)Kang, Yoon, and Ahn]{ipr2026}
Taegu Kang, Jaesik Yoon, and Sungjin Ahn.
\newblock Inference-time scaling in diffusion models through iterative partial refinement.
\newblock \emph{arXiv preprint arXiv:2605.19317}, 2026.

\bibitem[Li et~al.(2026{\natexlab{a}})Li, Guo, Teng, Shen, and He]{geoalign2026}
Zizun Li, Haoyu Guo, Runzhe Teng, Chunhua Shen, and Tong He.
\newblock {Geo-Align}: Video generation alignment via metric geometry reward.
\newblock \emph{arXiv preprint arXiv:2605.23903}, 2026{\natexlab{a}}.

\bibitem[Li et~al.(2026{\natexlab{b}})Li, Liu, Liu, Zhou, Wu, Chen, Xie, Wu, and Sun]{comfyclaw2026}
Zongxia Li, Dawei Liu, Fuxiao Liu, Yuhang Zhou, Xiyang Wu, Jingxi Chen, Jing Xie, Xiaomin Wu, and Lichao Sun.
\newblock {COMFYCLAW}: Self-evolving skill harnesses for image generation workflows.
\newblock \emph{arXiv preprint arXiv:2607.01709}, 2026{\natexlab{b}}.

\bibitem[Lin et~al.(2026{\natexlab{a}})Lin, Akbari, He, Zhao, Zhang, Akbari, Xu, Lu, Nan, Deng, Yeh, Ostadabbas, Fu, Dy, Zhao, and Wang]{phyground2026}
Juyi Lin, Arash Akbari, Yumei He, Lin Zhao, Haichao Zhang, Arman Akbari, Xingchen Xu, Zoe~Y. Lu, Enfu Nan, Hokin Deng, Edmund Yeh, Sarah Ostadabbas, Yun Fu, Jennifer Dy, Pu~Zhao, and Yanzhi Wang.
\newblock {PhyGround}: Benchmarking physical reasoning in generative world models.
\newblock \emph{arXiv preprint arXiv:2605.10806}, 2026{\natexlab{a}}.

\bibitem[Lin et~al.(2026{\natexlab{b}})Lin, Wu, Wang, Shi, Sang, He, Liu, Wei, Wu, Zhang, Wang, Zhang, Dumoulin, Xie, Zhou, Wang, and Lu]{harnessbenefit2026}
Minhua Lin, Juncheng Wu, Zijun Wang, Zhan Shi, Yisi Sang, Bing He, Zewen Liu, Tianxin Wei, Zongyu Wu, Zhiwei Zhang, Dakuo Wang, Xiang Zhang, Benoit Dumoulin, Cihang Xie, Yuyin Zhou, Suhang Wang, and Hanqing Lu.
\newblock Harness updating is not harness benefit: Disentangling evolution capabilities in self-evolving {LLM} agents.
\newblock \emph{arXiv preprint arXiv:2605.30621}, 2026{\natexlab{b}}.

\bibitem[Ma et~al.(2025)Ma, Tong, Jia, Hu, Su, Zhang, Yang, Li, Jaakkola, Jia, and Xie]{ma2025inference}
Nanye Ma, Shangyuan Tong, Haolin Jia, Hexiang Hu, Yu-Chuan Su, Mingda Zhang, Xuan Yang, Yandong Li, Tommi Jaakkola, Xuhui Jia, and Saining Xie.
\newblock Inference-time scaling for diffusion models beyond scaling denoising steps.
\newblock \emph{arXiv preprint arXiv:2501.09732}, 2025.

\bibitem[{Team Wan} et~al.(2025){Team Wan}, Wang, Ai, Wen, Mao, Xie, Chen, Yu, Zhao, Yang, Zeng, Wang, Zhang, Zhou, Wang, Chen, Zhu, Zhao, Yan, Huang, Feng, Zhang, Li, Wu, Chu, Feng, Zhang, Sun, Fang, Wang, Gui, Weng, Shen, Lin, Wang, Wang, Zhou, Wang, Shen, Yu, Shi, Huang, Xu, Kou, Lv, Li, Liu, Wang, Zhang, Huang, Li, Wu, Liu, Pan, Zheng, Hong, Shi, Feng, Jiang, Han, Wu, and Liu]{wan2025}
{Team Wan}, Ang Wang, Baole Ai, Bin Wen, Chaojie Mao, Chen-Wei Xie, Di~Chen, Feiwu Yu, Haiming Zhao, Jianxiao Yang, Jianyuan Zeng, Jiayu Wang, Jingfeng Zhang, Jingren Zhou, Jinkai Wang, Jixuan Chen, Kai Zhu, Kang Zhao, Keyu Yan, Lianghua Huang, Mengyang Feng, Ningyi Zhang, Pandeng Li, Pingyu Wu, Ruihang Chu, Ruili Feng, Shiwei Zhang, Siyang Sun, Tao Fang, Tianxing Wang, Tianyi Gui, Tingyu Weng, Tong Shen, Wei Lin, Wei Wang, Wei Wang, Wenmeng Zhou, Wente Wang, Wenting Shen, Wenyuan Yu, Xianzhong Shi, Xiaoming Huang, Xin Xu, Yan Kou, Yangyu Lv, Yifei Li, Yijing Liu, Yiming Wang, Yingya Zhang, Yitong Huang, Yong Li, You Wu, Yu~Liu, Yulin Pan, Yun Zheng, Yuntao Hong, Yupeng Shi, Yutong Feng, Zeyinzi Jiang, Zhen Han, Zhi-Fan Wu, and Ziyu Liu.
\newblock Wan: Open and advanced large-scale video generative models.
\newblock \emph{arXiv preprint arXiv:2503.20314}, 2025.

\bibitem[Thomas et~al.(2015)Thomas, Theocharous, and Ghavamzadeh]{thomas2015highconfidence}
Philip~S. Thomas, Georgios Theocharous, and Mohammad Ghavamzadeh.
\newblock High-confidence off-policy evaluation.
\newblock In \emph{Proceedings of the AAAI Conference on Artificial Intelligence}, pp.\  3000--3006, 2015.
\newblock \doi{10.1609/aaai.v29i1.9541}.

\bibitem[Wang et~al.(2026)Wang, Liu, Huang, Huang, Wang, Zhang, Li, and Wu]{sift2026}
Ruoyu Wang, Jialun Liu, Huayang Huang, Haibin Huang, Jiepeng Wang, Chi Zhang, Xuelong Li, and Yu~Wu.
\newblock {SIFT}: Self-imagination fine-tuning for physically plausible motion in video diffusion models.
\newblock In \emph{European Conference on Computer Vision}, 2026.

\bibitem[Wu et~al.(2026)Wu, Cui, Xue, Wang, Luo, Feng, Yang, Wang, Jiang, Zhu, Wang, Nie, Chen, and Wang]{worldreasonbench2026}
Keming Wu, Yijing Cui, Wenhan Xue, Qijie Wang, Xuan Luo, Zhiyuan Feng, Zuhao Yang, Sudong Wang, Sicong Jiang, Haowei Zhu, Zihan Wang, Ping Nie, Wenhu Chen, and Bin Wang.
\newblock {WorldReasonBench}: Human-aligned stress testing of video generators as future world-state predictors.
\newblock \emph{arXiv preprint arXiv:2605.10434}, 2026.

\bibitem[Yang et~al.(2026)Yang, Gong, Huang, Yang, Zhou, Huang, Li, Gao, Dai, Liu, Qiu, Yang, Chen, Yang, and Luo]{skillopt2026}
Yifan Yang, Ziyang Gong, Weiquan Huang, Qihao Yang, Ziwei Zhou, Zisu Huang, Yan Li, Xuemei Gao, Qi~Dai, Bei Liu, Kai Qiu, Yuqing Yang, Dongdong Chen, Xue Yang, and Chong Luo.
\newblock {SkillOpt}: Executive strategy for self-evolving agent skills.
\newblock \emph{arXiv preprint arXiv:2605.23904}, 2026.

\bibitem[Yang et~al.(2025)Yang, Teng, Zheng, Ding, Huang, Xu, Yang, Hong, Zhang, Feng, Yin, Zhang, Wang, Cheng, Xu, Gu, Dong, and Tang]{yang2025cogvideox}
Zhuoyi Yang, Jiayan Teng, Wendi Zheng, Ming Ding, Shiyu Huang, Jiazheng Xu, Yuanming Yang, Wenyi Hong, Xiaohan Zhang, Guanyu Feng, Da~Yin, Yuxuan Zhang, Weihan Wang, Yean Cheng, Bin Xu, Xiaotao Gu, Yuxiao Dong, and Jie Tang.
\newblock {CogVideoX}: Text-to-video diffusion models with an expert transformer.
\newblock In \emph{International Conference on Learning Representations}, 2025.
\newblock URL \url{https://arxiv.org/abs/2408.06072}.

\bibitem[Yin et~al.(2026)Yin, Shi, Guo, and Wang]{vigor2026}
Tengjiao Yin, Jinglei Shi, Heng Guo, and Xi~Wang.
\newblock {VIGOR}: Video geometry-oriented reward for temporal generative alignment.
\newblock \emph{arXiv preprint arXiv:2603.16271}, 2026.

\bibitem[Yuan et~al.(2026)Yuan, Zhang, Friedrich, Beltran-Velez, Hall, Askari-Hemmat, Han, Ballas, Drozdzal, and Romero-Soriano]{wmreward2026}
Jianhao Yuan, Xiaofeng Zhang, Felix Friedrich, Nicolas Beltran-Velez, Melissa Hall, Reyhane Askari-Hemmat, Xiaochuang Han, Nicolas Ballas, Michal Drozdzal, and Adriana Romero-Soriano.
\newblock Inference-time physics alignment of video generative models with latent world models.
\newblock \emph{arXiv preprint arXiv:2601.10553}, 2026.

\bibitem[Zhang et~al.(2026{\natexlab{a}})Zhang, Deng, Sun, Ma, Wang, Du, Pan, Huang, Liang, Huang, Zhang, Xie, Liu, and Zhou]{physisforcing2026}
Peiwen Zhang, Yufan Deng, Shangkun Sun, Juncheng Ma, Duomin Wang, Jonas Du, Zilin Pan, Ye~Huang, Hao Liang, Songyan Huang, Ruihua Zhang, Enze Xie, Ming-Yu Liu, and Daquan Zhou.
\newblock {PhysisForcing}: Physics reinforced world simulator for robotic manipulation.
\newblock \emph{arXiv preprint arXiv:2606.28128}, 2026{\natexlab{a}}.

\bibitem[Zhang et~al.(2026{\natexlab{b}})Zhang, Cui, Wang, Li, Qiu, Zhu, and He]{librarydrift2026}
Xing Zhang, Yanwei Cui, Guanghui Wang, Ziyuan Li, Wei Qiu, Bing Zhu, and Peiyang He.
\newblock Library drift: Diagnosing and fixing a silent failure mode in self-evolving {LLM} skill libraries.
\newblock \emph{arXiv preprint arXiv:2605.19576}, 2026{\natexlab{b}}.

\bibitem[Zhang et~al.(2026{\natexlab{c}})Zhang, Li, Zhang, Gao, Yan, Jiang, Tang, Yin, Wu, Chen, Xu, Shu, Zhang, Xu, Chen, Wang, Liu, Zhou, Zhang, Zhao, and Wu]{qwenimageagent2026}
Zekai Zhang, Jiahao Li, Jie Zhang, Kaiyuan Gao, Kun Yan, Lihan Jiang, Ningyuan Tang, Shengming Yin, Tianhe Wu, Xiaoyue Chen, Xiao Xu, Yan Shu, Yanran Zhang, Yixian Xu, Yuxiang Chen, Zhendong Wang, Zihao Liu, Zikai Zhou, Huishuai Zhang, Dongyan Zhao, and Chenfei Wu.
\newblock {Qwen-Image-Agent}: Bridging the context gap in real-world image generation.
\newblock \emph{arXiv preprint arXiv:2606.26907}, 2026{\natexlab{c}}.

\end{thebibliography}

\appendix

\newpage

\section{Extended Generalization and Prospective Evaluations}
\label{sec:integrated}

\subsection{Prospective PAI Evaluation}
By contrast, the conformal-risk-control stop test asks whether SceneLang should be
replaced on new tasks under a fully frozen gate. It fixes 96 development tasks and a
disjoint 48-task confirmation set
before scoring. The procedure uses strict task-LOO proposals, eight-fold inner
calibration, exact SceneLang fallback, and three fixed CRC thresholds. The merged
development panel contains 288 official score units. At every threshold, the protocol
commits the same two tasks ($2/96$), and both are harmful. The mean gain over the exact
SceneLang anchor is $-0.173$; the threshold-0 simultaneous interval is
$[-0.835,0.000]$, and the conditional harm upper bound is $1.0$. All five CRC
feasibility checks and all three downstream development requirements fail. The procedure therefore retains SceneLang. No videos, VQA outputs, or scores are released for
the 48-task confirmation set.

\subsection{CompletionGuard Successor Evaluation}
A related successor test considers whether a task-disjoint replacement for
CompletionGuard and ContinuityGuard should replace the incumbent over exact SceneLang. It freezes
both guards before
generating three arms on 96 fresh development tasks, while a separate 28-task
confirmation set remains sealed. The 58-dimensional strict task-LOO router makes only two non-fallback proposals
($2/96$), one per candidate arm. Both are non-harmful but improve only QS, with zero RO
gain. At every frozen threshold, the minimum 24 commits fails, the conditional harm upper
bound is $0.935>0.25$, and the simultaneous gain lower bound is exactly zero; the
matched-random lower bound is also non-positive. Thus no CRC threshold is selected and
the secondary RO/QS gates are not entered. The procedure retains exact SceneLang.
We did not generate videos or release VQA for the 28-task confirmation set, and we did
not read its scores; the split6 evolution figure is unchanged.

\subsection{PhyT2V Evaluation}
As a complementary deployment check, we ask whether candidate expansion adds value
beyond retaining the incumbent on a 40-scene PhyT2V evaluation set. Base scores $23.57$,
original-random $29.45$, fixed-round2 $31.71$, the GPT-5.4 incumbent $33.08$, and
selected P-IQ $33.08$. Although five scenes were
eligible for round3, no round3 proposal was promoted. Selected-minus-incumbent is
therefore exactly $0.00$ (CI $[0,0]$), whereas selected-minus-fixed-round2 is $+1.37$
(CI $[-4.04,+7.05]$). The larger $+9.51$ contrast with Base is attributable to the
retained GPT-5.4 proposal, not to \scope candidate expansion. Moreover, only $15/40$
scene IDs are new relative to the preceding evaluation set. We therefore interpret this result as guarded
incumbent retention, not as a positive method effect.
\label{tab:phyt2v-headline}

\subsection{Prospective Integrated-System Evaluation}
Turning from deployment diagnostics to fresh-task tests, we ask whether the full harness
improves on 32 fresh, task-disjoint Physics-IQ tasks and passes its deployment criteria. Full
\scope is $+0.096$ over Base (CI
$[-0.887,+1.077]$) and $-1.019$ versus a simple frozen router (CI
$[-2.159,+0.146]$). Retirement is significantly harmful relative to
commit-plus-fallback: $-0.773$ (CI $[-1.471,-0.203]$). A separate 40-scene
skill-optimization confirmation asks whether the typed update beats matched random. It
does not: the contrast is $-0.32$ (CI $[-5.98,+5.15]$), and the $15\%$ harm rate
exceeds the frozen $10\%$ ceiling. Both deployment criteria fail.

\subsection{Selective Grounded Router}
Against that backdrop, the grounded-router experiment asks whether selective tool use
can avoid the harm of always-on assistance while still outperforming no-tool and random
routing. The prospective
CausalVerse test uses task-disjoint fit, conformal, applicability, target, and
confirmation roles. Every target or confirmation task has the same two candidate videos
(no-tool and profiled-card), three frozen seeds, and official CRONOS DisMo scoring; the
four policies only choose among those shared videos. The fit-only group policy applies
$48$ cards, versus $180$ for always-on and exactly $48$ for a profile/source-stratified
random route.

\begin{table}[H]
\centering
\scriptsize
\caption{\textbf{Selective grounded-router final result.} Absolute DisMo cosine
similarities reuse the same two candidate arms and three seeds for all four policies.
Cards count profiled prompt-card applications across $192$ target and $96$ confirmation
tasks; they are not additional generated videos.}
\label{tab:grounded-router-final}
\setlength{\tabcolsep}{4pt}
\begin{tabular}{@{}lcccc@{}}
\toprule
\textbf{Policy} & \textbf{Cards} & \textbf{Target} & \textbf{Confirmation} & \textbf{Pooled}\\
\midrule
Exact no-tool & $0$ & $0.4071$ & $0.4425$ & $0.4189$\\
Always-on profiled card & $180$ & $0.4021$ & $0.4352$ & $0.4132$\\
Fit-only group policy & $48$ & $0.4096$ & $0.4401$ & $0.4197$\\
Same-budget stratified random & $48$ & $0.4084$ & $0.4424$ & $0.4197$\\
\bottomrule
\end{tabular}
\end{table}

The result is mixed. The selective router avoids the harm of always-on use but does not
establish an absolute generation gain or routing skill beyond random. In DisMo points, the preregistered
pooled contrasts are $+0.083$ for fit-group minus no-tool (CI
$[-0.191,+0.381]$), $+0.660$ for fit-group minus always-on (CI
$[+0.016,+1.315]$), and $+0.002$ for fit-group minus same-budget random (CI
$[-0.281,+0.299]$). The target result is positive: $+0.243$ points overall and
$+1.460$ over the $32$ switched tasks. Independent confirmation, however, reverses
sign to $-0.237$ overall and $-1.420$ over $16$ switched tasks, with negative means
for all three confirmation seeds. Motion/sharpness ratios are $1.001/1.008$, and all
$1{,}728$ scores are present.

We therefore conclude that selective fallback significantly avoids the strong harm of
always-on use, especially for the geometry group, while leaving the no-tool and
same-budget-random contrasts unresolved. Because the evidence calls were acquired before
policy selection and shared across all four policies, the experiment cannot identify
fewer tool calls as the cause of the quality change.

\subsection{PhyGround Applicability Analysis}
Finally, the PhyGround chain asks whether the preregistered selector can clear
applicability before any target score is read. It stops at the pre-score gate on a
32-task target set.
The pipeline completes all $288$ frozen target videos and score-blind features, but the
frozen proxy produces only $4/32$ raw non-incumbent proposals, below the required minimum
of $8$.
It produced $0$ proposals at confidence $\geq-0.25$ (minimum $8$) and $0$ at
confidence $\geq0$ (minimum $4$), while law coverage passed for eight families. The
protocol therefore stops before any target PhyGround score is read and leaves the
21-task confirmation set untouched. This is an applicability failure, not a negative
target-score estimate.

A score-free audit then explains why the gate fails. The main driver is the global
task-max conformal deduction ($58.47$, or $0.731$ after gain normalization) combined
with worst-head selection; motion and sharpness bind on no selected task. The frozen
held-out calibration split likewise produces only $3/64$ raw switches. At that observed
rate, an eight-of-32 gate has probability $8.9\times10^{-5}$. This post-failure replay
diagnoses the frozen design but cannot justify replacing the policy on the consumed pool.

\section{Additional Discussion}
\label{sec:extended-discussion}
The component experiments clarify which parts of the current inference-control space
are useful. Text conditioning and the sampler modification improve their matched
comparisons, and the supervised reward selector outperforms matched random under
scene-grouped OOF evaluation. In contrast, the region-state prior and the matched
web/SAM tool variants do not show reliable gains. These results are important because
they prevent the integrated Physics-IQ improvement from being attributed uniformly to
every component.

The prospective experiments expose a different limitation: reliable deployment
requires both sufficient proposal coverage and a selector that transfers beyond its
development distribution. The selective router avoids the degradation of always-on tool
use while remaining unresolved against no-tool and same-budget random routing, and its
confirmation result reverses direction. PhyGround fails even earlier because the frozen
selector proposes too few non-incumbent actions. These cases motivate future work on
task-disjoint calibration and selectors that can abstain when uncertainty is high,
rather than relaxing thresholds after observing target outcomes.

\section{Reproducibility Details}
\label{sec:extended-claims}
All reported results are linked to the scripts and machine-readable outputs used
to generate the corresponding tables and figures. The release preserves the exact
protocol-local inputs for each experiment and does not aggregate statistics across
incompatible evaluation units.

\section{Evaluation Protocols and Statistical Details}
\label{sec:governance-matrix}
Table~\ref{tab:governance} summarizes the independent unit, evaluation role, and
primary comparison used for each experimental setting. It is provided to make the
statistical units and protocol boundaries explicit.

\begin{table}[t]
\centering
\scriptsize
\caption{\textbf{Result-to-artifact index.} Each row links a reported experimental
result to its smallest authoritative machine-readable source.}
\label{tab:claim-artifacts}
\setlength{\tabcolsep}{2.5pt}
\begin{tabular}{@{}p{0.27\linewidth}p{0.20\linewidth}p{0.47\linewidth}@{}}
\toprule
\textbf{Result} & \textbf{Evaluation role} & \textbf{Authoritative artifact(s)}\\
\midrule
Matched 40-scene Physics-IQ procedure vs exact Base & same-universe system result &
\nolinkurl{physiq_common_base_result.json}; matched publication in the release manifest\\
Retrospective LingBot harness proxy & same-protocol retrospective addition &
matched score evidence and result in the release manifest\\
SceneLang text edit & component; three fixed seeds &
\nolinkurl{scenelang_skill_table.json}\\
Fixed sampler edit & component; four fixed seeds &
\nolinkurl{sampler_axis_joint_fourseed_result.json}\\
Supervised reward selector & scene-grouped OOF &
\nolinkurl{reward_gain_ci.json}\\
Integrated and new-task prospective failures & prospective negative / stop &
\nolinkurl{pai_scope_lifecycle_staircase_result.json};
\nolinkurl{scope_v6_fit_group_final_result.json};
\nolinkurl{pai_wide_crc_human096_development_completion_r7.json};
\nolinkurl{pai_wide_completion_guard_development_completion.json}\\
\bottomrule
\end{tabular}
\end{table}

\begin{table}[t]
\centering
\scriptsize
\caption{\textbf{Protocol and statistical status matrix.} Views are nested within
scenes for Physics-IQ; secondary, same-split, and post-hoc rows never receive promotion
credit.}
\label{tab:governance}
\setlength{\tabcolsep}{2.5pt}
\begin{tabular}{@{}p{0.19\linewidth}p{0.18\linewidth}p{0.22\linewidth}p{0.33\linewidth}@{}}
\toprule
\textbf{Setting} & \textbf{Independent unit} & \textbf{Evaluation role} & \textbf{Primary comparison}\\
\midrule
GPT-5.4 SceneLang & scene; 3 fixed seeds averaged & component evidence & SceneLang vs Base; CI conditional on these seeds\\
Learned reward & 64 scenes; 179 nested views & component evidence & scene-grouped OOF reward vs matched random\\
Physics-IQ tool axis & scene; 3 nested views & preregistered negative ablation & web-only and web+SAM vs no-tool; incremental SAM vs web-only\\
PAI split6 & task & same-split deployment diagnostic & all matched controls shown; post-hoc \scope separated\\
OpenS2V & category-stratified item & prospective deployment case & \scope vs LingBot; unresolved\\
Fresh PhyT2V evaluation & scene & deployment diagnostic & selected vs incumbent/fixed proposal; no expansion credit\\
Fresh 32-task Physics-IQ transfer & task & prospective integrated test & full harness vs Base/router; frozen guard gate\\
Typed-update confirmation & scene & prospective integrated test & typed update vs matched random plus harm ceiling\\
Grounded selective router & task; role/profile/family bootstrap & prospective target+confirmation; STOP & fit-group vs no-tool, always-on, and same-budget random\\
\bottomrule
\end{tabular}
\end{table}

\section{Full PAI Metric Breakdown}
\label{sec:pai-8d}
Table~\ref{tab:pai-8d} exposes the quality components aggregated by QS in the main
PAI panel. Frozen Base is shown in absolute score units; every other row is a signed
difference from that same Base. These component-wise values are descriptive diagnostics,
not separately tested deployment criteria.

\begin{table}[t]
\centering
\scriptsize
\caption{\textbf{Complete official-72B PAI quality profiles.} SC/BC are subject/background
consistency; MS is motion smoothness; AQ/IQ are aesthetic/imaging quality; OC is overall
text--video consistency; IS/IB are I2V subject/background fidelity. Frozen Base is
absolute and all other rows are $\Delta$ vs Base. All 12 main-panel rows are included.}
\label{tab:pai-8d}
\setlength{\tabcolsep}{1.5pt}
\resizebox{\linewidth}{!}{%
\begin{tabular}{@{}lcccccccc@{}}
\toprule
\textbf{Method} & \textbf{SC} & \textbf{BC} & \textbf{MS} & \textbf{AQ} &
\textbf{IQ} & \textbf{OC} & \textbf{IS} & \textbf{IB}\\
\midrule
\csname @@input\endcsname pai_full_metrics_unified72_eight_dimension_rows.tex
\bottomrule
\end{tabular}}
\end{table}

\section{Additional Qualitative Analysis}
\label{sec:posthoc-audits}
The following visualization illustrates a disclosed post-hoc paper-submersion
intervention. It is included only as qualitative analysis and is not a native generation
result or benchmark comparison.

\begin{figure}[t]
\centering
\includegraphics[width=\linewidth]{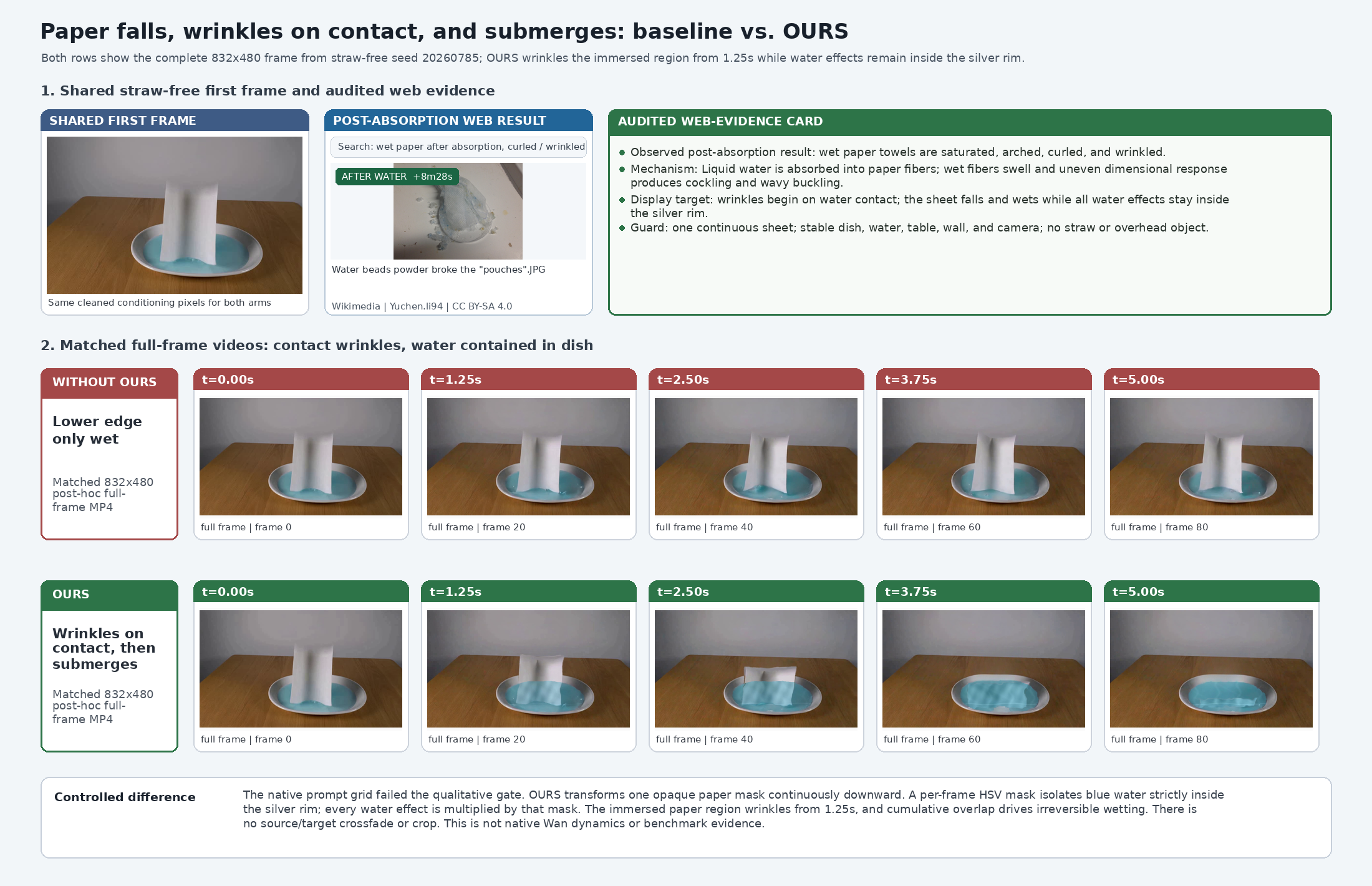}
\caption{\textbf{Web-grounded complete paper submersion intervention.}
Both rows use the same straw-free seed-$20260785$ Wan2.2 baseline. The unexplained
overhead straw is inpainted once in their shared conditioning frame.
\textbf{WITHOUT OURS} shows the complete baseline frame, in which only the lower paper
edge contacts the water. \textbf{OURS} applies a disclosed post-hoc full-frame
intervention: the complete paper mask is compressed onto the dish bottom; every final
paper pixel lies within the audited water ellipse; and water tint, caustics, refraction,
and a ripple band are rendered after the paper. Both published A/B MP4s and the displayed
frames are complete $832\!\times\!480$ images, with dish cropping disabled in the
manifest. The post-absorption Wikimedia image is display-only and never enters either
video. Retrieved pixels likewise never enter the video. The native four-seed prompt grid
failed the qualitative gate and is not shown. This visualization is a qualitative
post-hoc state intervention, not native Wan dynamics, an unbiased generation result, or
benchmark evidence.}
\label{fig:sam-web-case}
\end{figure}

\end{document}